\newcommand{\s}{\enspace}
\DeclareMathOperator*{\argmax}{argmax}
\newtheorem{theorem}{Theorem}
\newtheorem{corollary}{Corollary}[theorem]
\newtheorem*{definition}{Definition}
\newtheorem{example}{Example}
\newcommand{\mat}[1]{\mathtt{#1}}
\renewcommand{\S}{S}
\newcommand{\E}{E}
\newcommand{\R}{R}
\tikzstyle{none} = [inner sep=0pt]
\tikzstyle{black} = [circle, fill=black, scale=0.5]
\tikzstyle{white} = [draw, circle, scale=0.5]
\tikzstyle{antipode} = [draw, rectangle, fill=red, scale=0.5]
\tikzstyle{directed} = [->]
\def\bR{\begin{color}{red}}
\def\bB{\begin{color}{blue}}
\def\bM{\begin{color}{magenta}}
\def\bC{\begin{color}{cyan}}
\def\bW{\begin{color}{white}}
\def\bBl{\begin{color}{black}}
\def\bG{\begin{color}{green}}
\def\bY{\begin{color}{yellow}}
\def\e{\end{color}}
\title{Towards Compositional Distributional Discourse Analysis}
\author{
Bob Coecke, Giovanni de Felice, Dan Marsden \& Alexis Toumi
\email{\{firstname.lastname\}@cs.ox.ac.uk}
\institute{Department of Computer Science, University of Oxford}}
\date{December 2017}
\begin{document}
\maketitle

\begin{abstract}
  Categorical compositional distributional semantics provide a method to derive the meaning of a sentence from the meaning of its individual words:
  the grammatical reduction of a sentence automatically induces a linear map for composing the word vectors obtained from distributional semantics.
  In this paper, we extend this passage from word-to-sentence to sentence-to-discourse composition.
  To achieve this we introduce a notion of basic anaphoric discourses as a mid-level representation between natural language discourse formalised in terms of basic \emph{discourse representation structures} (DRS); and knowledge base queries over the Semantic Web as described by \emph{basic graph patterns} in the Resource Description Framework (RDF).
  This provides a high-level specification for compositional algorithms for \emph{question answering} and \emph{anaphora resolution}, and allows us to give a picture of \emph{natural language understanding} as a process involving both statistical and logical resources.
\end{abstract}

\section*{Introduction}
In the last couple of decades, the traditional \emph{symbolic} approach to AI and cognitive science ---
which aims at characterising human intelligence in terms of abstract logical processes ---
has been challenged by so-called \emph{connectionist} AI: the study of the human brain as a complex network of basic processing units \cite{Smolensky87}.
When it comes to human language, the same divide manifests itself as the opposition between two principles, which in turn induce two distinct approaches to Natural Language Processing (\emph{NLP}).
On one hand Frege's principle of \emph{compositionality} asserts that the meaning of a complex expression is a function of its sub-expressions, and the way in which they are composed ---
\emph{distributionality} on the other hand can be summed up in Firth's maxim ``You shall know a word by the company it keeps".
Once implemented in terms of concrete algorithms we have
expert systems driven by formal logical rules on one end,
artificial neural networks and machine learning on the other.

Categorical Compositional Distributional (\emph{DisCoCat}) models, first introduced in \cite{DisCoCat08}, aim at getting the best of both worlds:
the string diagrams notation borrowed from category theory allows to manipulate the grammatical reductions as linear maps,
and compute graphically the semantics of a sentence as the composition of the vectors
which we obtain from the distributional semantics of its constituent words.

In this paper, we introduce \emph{basic anaphoric discourses} as mid-level representations between natural language discourse on one end --- formalised in terms of basic \emph{discourse representation structures} (DRS) \cite{Abramsky14}; and knowledge queries over the Semantic Web on the other --- given by \emph{basic graph patterns} in the Resource Description Framework (RDF) \cite{Stefanoni18}.
We construct discourses as formal diagrams of real-valued matrices and we then use these diagrams to give abstract reformulations of NLP problems: probabilistic \emph{anaphora resolution} and \emph{question answering}.

In the first two sections we introduce the notation used in the rest of the paper,
then give a brief summary of how DisCoCat models can be used to turn declarative sentences into probabilistic \texttt{Ask}-type queries in the \textsc{Sparql} protocol and RDF query language.
In the third section, we extend this analysis to \texttt{Select}-type queries and show how they can be translated as ``who" or ``whom" questions in the simple case,
as discourses making use of ambiguous pronouns such as ``they" or ``them" in more complex cases, i.e. when the \textsc{Sparql} query contains more than two output-variables.
In the last section we give a finer-grained analysis of anaphora resolution as a resource-sensitive process, copying word-vectors from memory and feeding them in the meaning computation of some anaphoric discourse.

We conclude our discussion with related work on deep neural networks and knowledge graph factorisation, as well as potential directions for modeling more involved linguistic phenomena and translating them in terms of knowledge base queries.

\section{Diagrams of Matrices for Knowledge Graphs}

First we fix an ordered set of \emph{scalars} $\S$ with addition $+$, multiplication $\times$ and units $0, 1 \in \S$ respectively,
then we write $\mat{M} : a \rightarrow b$ for $a \times b$ matrices with entries in $\S$ and $\mat{id}_{n} : a \rightarrow a$ for the $a \times a$ identity matrix.
For all $a, b, c \in \mathbb{N}$ and matrices $\mat{M}_1 : a \rightarrow b$, $\mat{M}_2 : b \rightarrow c$, we have their composition $\mat{M}_2 \mat{M}_1 : a \rightarrow c$ given by matrix multiplication.
We also have element-wise addition $\mat{M} + \mat{N}$ of matrices $\mat{M}, \mat{N} : a \rightarrow b$.
Hence given a choice of scalars (where $+$ and $\times$ respect the axioms of a semi-ring) we have the structure of a category $\mat{Mat}(\S)$, with the natural numbers as objects and matrices as arrows.
We will focus on the Boolean case $\S = \mathbb{B}$ with $+ = \lor$ and $\times = \land$ and on the non-negative reals $\S = \mathbb{R}^+$ encoding probabilities.
Note that a proper treatment of normalisation is beyond the scope of this paper.

We will manipulate matrices using the \emph{string diagrams} notation for monoidal categories --- for a reference guide to graphical languages see \cite{Selinger09}, for an introduction targeted at a general audience see \cite{Coecke09b} ---
we depict $\mat{id}_1 : 1 \rightarrow 1$ as the empty diagram and other identity matrices as labeled wires; we represent non-trivial matrices as boxes with labeled input and output wires and their composition as vertical concatenation.
Finally, horizontal juxtaposition of boxes depicts the Kronecker product
$\mat{M} \otimes \mat{N} : a \otimes c \rightarrow b \otimes d$ of matrices $\mat{M} : a \rightarrow b$, $\mat{N} : c \rightarrow d$.
We overload the notation and also write $a \otimes b$ for multiplication of natural numbers, i.e. the dimension of the tensor space.

We want to translate natural language into machine-readable format through linear algebra,
hence we reformulate the standard Semantic Web terminology in terms of vectors and matrices.
We assume that our vocabulary has been encoded in RDF: nouns come from an ordered set of \emph{entities} $\E = \set{Alice, boys, \dots}$, verbs come from an ordered set of \emph{relations} $\R = \set{loves, tell, \dots}$.
In Semantic Web languages, relations are always binary so we will focus on transitive verbs, for a translation of adjectives and intransitive verbs in RDF see \cite{Antonin14}. 
We encode every word as a binary code with one bit up, also called a \emph{one-hot vector} --- both entities $e_i \in \E$ and relations $r_j \in \R$ correspond to one-hot row vectors, of dimension $|\E|$ and $|\R|$ respectively.
We denote these one-hot vectors using Dirac's bra-ket notation:
$$
\ket{e_i} \enspace = \enspace \bigg( \lefteqn{\underbrace{
\phantom{0 \ \dots \ 0 \ 1 \ 0 \ \dots \ 0}}_{|\E|}}
\overbrace{0 \ \dots \ 0}^{i} \ 1 \ 0 \ \dots 0 \
\bigg) \enspace : \enspace 1 \rightarrow |\E|
\qquad \qquad
\ket{r_j} \enspace = \enspace \bigg( \lefteqn{\underbrace{
\phantom{0 \ \dots \ 0 \ 1 \ 0 \ \dots \ 0}}_{|\R|}}
\overbrace{0 \ \dots \ 0}^{j} \ 1 \ 0 \ \dots 0 \
\bigg) \enspace : \enspace 1 \rightarrow |\R|
\qquad
$$
The corresponding one-hot column vectors are denoted $\bra{e_i}: |\E| \rightarrow 1$ and $\bra{r_j} : |\R| \rightarrow 1$.
Row vectors are also called \emph{states}, in the diagrammatic language we depict them as triangles with no inputs;
similarly column vectors are called \emph{effects} which we depict as triangles with no outputs.

Composing a state with an effect of the same dimension yields a diagram with no inputs or ouputs: a scalar in $\S$.
Thus, in the case $\S = \mathbb{R}^+$ any effect $\mat{q} : n \rightarrow 1$ induces an ordering of the states $\mat{a}, \mat{b} : 1 \rightarrow n$, take $\mat{a} \leq \mat{b}$ if and only if $(\mat{q} \mat{a}) \leq (\mat{q} \mat{b})$.
When $\S = \mathbb{B}$, $\mat{q}$ yields a subset of the states: $\mat{a} : 1 \rightarrow n$ is in that subset if and only if $\mat{q} \mat{a} = 1$.
In both cases, the one-hot column vector $\bra{e} : |\E| \rightarrow 1$ corresponding to an entity $e \in \E$ acts as a test of identity with $e$: 
for all $e' \in \E$ we have $\bra{e}\ket{e'} = 1$ if and only if $e' = e$.

A \emph{knowledge graph} (also called an \emph{ontology}) is given by a set of \emph{RDF triples} $K \subseteq \E \times \R \times \E$
which contains all the true statements that we care about, e.g. $(Alice, loves, Bob), (Bob, hates, Charles) \in K$.
We can turn any knowledge graph into an effect by summing over the column vectors of its triples:
$$
\bra{K} \enspace = \enspace
\sum_{(s, v, o) \in K} \bra{s} \otimes \bra{v} \otimes \bra{o}
\enspace : \enspace | \E | \otimes | \R | \otimes | \E | \rightarrow 1
$$
Then for any triple $t \in \E \times \R \times \E$, if we compose the state
$\ket{t} : 1 \rightarrow | \E | \otimes | \R | \otimes | \E |$ with the effect $\bra{K}$, the scalar $\bra{K}\ket{t} \in \S$ we obtain is equal to $1$ if and only if $t \in K$--- we consider this scalar as the outcome of a basic knowledge graph query.
In order to study more complex queries, we need a way to wire knowledge graphs together: we do this using the \emph{generalised Kronecker delta} together with the following standard results.

\begin{definition}
For $a, b, n \in \mathbb{N}$, the $(a, b)$ \emph{Kronecker delta} over $n$
--- which we depict as an $a$-input, $b$-output \emph{spider} --- is the matrix:
\begin{equation}
\delta_n^{a, b} \enspace = \enspace
\sum_{i < n} \enspace \ket{e_i}^{(\otimes b)} \bra{e_i}^{(\otimes a)}
\enspace : \enspace n^{a} \rightarrow n^{b}
\end{equation}
\end{definition}
\begin{theorem}
\label{spiderFusion}
For all $n \in \mathbb{N}$, the set of matrices
$\big\{ \delta_n^{a, b} \big\}_{a, b \in \mathbb{N}}$
obeys the \emph{spider fusion} equations:
$$
\begin{tikzpicture} [scale=0.666, baseline={([yshift=-.5ex]current bounding box.center)},vertex/.style={anchor=south, circle,fill=black!25,minimum size=18pt,inner sep=2pt}]

\node [draw, circle, scale=0.5] (0) at (-1, 0.5) {};

\node [draw, circle, scale=0.5] (1) at (1, -0.5) {};

\draw (0) to[out=0, in=-90] (0, 1.5);
\draw (0) to[out=-100, in=90] (-1, -1.5);
\draw (0) to[out=180, in=-90] (-2.5, 1.5);
\draw (0) to[out=180, in=90] (-2.5, -1.5);

\draw [decorate,decoration={brace,amplitude=5},xshift=0.5]
(-2.6, 1.6) -- (0.1, 1.6);
\node at (-1.25, 2.25) {$a$};

\draw [decorate,decoration={brace,amplitude=5},xshift=0.5]
(-0.9,-1.6) -- (-2.6,-1.6);
\node at (-1.75, -2.25) {$b$};

\draw (0) to[out=0, in=90] (1);
\draw (0) to[out=-90, in=180] (1);

\draw (1) to[out=0, in=-90] (2.5, 1.5);
\draw (1) to[out=0, in=90] (2.5, -1.5);
\draw (1) to[out=80, in=-90] (1, 1.5);
\draw (1) to[out=180, in=90] (0, -1.5);

\draw [decorate,decoration={brace,amplitude=5},xshift=0.5]
(0.9, 1.6) -- (2.6, 1.6);
\node at (1.75, 2.25) {$c$};

\draw [decorate,decoration={brace,amplitude=5},xshift=0.5]
(2.6,-1.6) -- (-0.1,-1.6);
\node at (1.25, -2.25) {$d$};

\node at (0, 0) {$\dots$};
\node at (-1.2,1.4) {$\dots$};
\node at (1.2,-1.4) {$\dots$};
\node at (1.8, 1.4) {$\dots$};
\node at (-1.75,-1.4) {$\dots$};
\end{tikzpicture}
=
\quad
\begin{tikzpicture} [scale=0.5, baseline={([yshift=-.5ex]current bounding box.center)},vertex/.style={anchor=south, circle,fill=black!25,minimum size=18pt,inner sep=2pt}]

\node [draw, circle, scale=0.5] (0) at (0,0) {};

\draw (0) to[out=0, in=-90] (1.5, 1.5);
\draw (0) to[out=0, in=90] (1.5, -1.5);
\draw (0) to[out=180, in=-90] (-1, 1.5);
\draw (0) to[out=180, in=90] (-1, -1.5);
\draw (0) to[out=180, in=-90] (-1.5, 1.5);
\draw (0) to[out=180, in=90] (-1.5, -1.5);

\node at (0.5, 1.4) {$\dots$};
\node at (0.5, -1.4) {$\dots$};

\draw [decorate,decoration={brace,amplitude=5},xshift=0.5]
(-1.6,1.6) -- (1.6,1.6);
\node at (0, 2.25) {$a + c$};

\draw [decorate,decoration={brace,amplitude=5},xshift=0.5]
(1.6,-1.6) -- (-1.6,-1.6);
\node at (0, -2.25) {$b + d$};
\end{tikzpicture}
$$
\end{theorem}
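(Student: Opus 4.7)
The plan is a direct computation from the definition of $\delta_n^{a,b}$. In the displayed picture, spider $1$ has $a$ free inputs, $b$ free outputs, plus two connecting wires going to spider $2$; spider $2$ has $c$ free inputs, $d$ free outputs, plus the two wires coming from spider $1$. Considered in isolation, these are $\delta_n^{a, b+2}$ and $\delta_n^{c+2, d}$, and the left-hand side is their composition along the two internal wires.

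First I would expand both spiders using the definition and the bilinearity of composition, then exploit the orthonormality identity $\bra{e_j}\ket{e_i} = \delta_{ij}$ (the ordinary Kronecker delta of indices, already noted in the excerpt as the ``test of identity"). Each of the two internal wires contributes a factor of $\delta_{ij}$, collapsing the double sum to a single one:
$$\sum_{i,j<n} \ket{e_i}^{(\otimes b)} \bra{e_i}^{(\otimes a)} \otimes \ket{e_j}^{(\otimes d)} \bra{e_j}^{(\otimes c)} \cdot (\bra{e_j}\ket{e_i})^2 \s = \s \sum_{i<n} \ket{e_i}^{(\otimes (b+d))} \bra{e_i}^{(\otimes (a+c))} \s = \s \delta_n^{a+c, b+d},$$
which is the right-hand side. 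The identical calculation works for any number $k \geq 1$ of connecting wires, since $(\bra{e_j}\ket{e_i})^k = \delta_{ij}^k = \delta_{ij}$; this is what makes the spider idempotent under internal contraction. The general spider fusion law --- any connected diagram of such spiders collapses to a single spider whose input/output count is the total free input/output count --- then follows by induction on the number of spider vertices, fusing two at a time.

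The main obstacle is not the algebra, which is essentially one line, but the geometric bookkeeping: the two connecting wires in the picture are interleaved with the free inputs and outputs, so the left-hand side is not literally $\delta_n^{a, b+2} \otimes \delta_n^{c+2, d}$ followed by contraction of adjacent tensor factors. I would absorb this wire-permutation step into the symmetry (swap) structure of $\mat{Mat}(\S)$, whose soundness for the graphical calculus of symmetric monoidal categories --- Selinger's survey cited earlier --- justifies treating the rearrangement as a no-op on the underlying matrix.
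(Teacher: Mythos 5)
Your computation is correct, and it is in fact the verification that the paper does not carry out: the paper's ``proof'' consists of rewriting the picture as the single algebraic identity $\big( \mat{id}_{n}^{(\otimes b)} \otimes \delta_{n}^{(k + c), d} \big)\, \big( \delta_{n}^{a, (b + k)} \otimes \mat{id}_{n}^{(\otimes c)} \big) = \delta_{n}^{(a + c), (b + d)}$ for $k$ connecting wires and then citing an external reference for the index calculation, which is exactly the orthonormality-collapse argument you give. The one genuine difference is how the two of you parse the left-hand diagram. You read it as ``tensor $\delta_n^{a,b+2}$ with $\delta_n^{c+2,d}$, then contract the internal legs,'' which forces you to invoke the symmetry morphisms and coherence for symmetric monoidal categories to justify the wire rearrangement; that is sound, but it is more machinery than needed. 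The paper instead uses the planar ``staircase'' factorisation above --- first apply $\delta_{n}^{a, (b + k)}$ while the $c$ wires pass through on an identity, then apply $\delta_{n}^{(k + c), d}$ while the $b$ wires pass through --- which matches the picture as drawn (first spider up-left, second down-right) and requires no swaps at all, so the obstacle you flag as ``the main obstacle'' simply does not arise. Your observation that the collapse works for any $k \geq 1$ because $(\bra{e_j}\ket{e_i})^k = \delta_{ij}$ is exactly right and is the reason the paper states the identity with a general $k$; your further remark about fusing arbitrary connected spider diagrams by induction is correct but goes beyond what the theorem asserts. Net effect: your proof is more self-contained than the paper's, at the cost of an unnecessary appeal to the symmetry structure that the staircase decomposition avoids.
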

\begin{proof}
We have: $
\big( \mat{id}_{n}^{(\otimes b)} \otimes \delta_{n}^{(k + c), d} \big)
\enspace
\big( \delta_{n}^{a, (b + k)} \otimes \mat{id}_{n}^{(\otimes c)} \big)
\ \ = \ \ \delta_{n}^{(a + c), (b + d)}
$, see \cite{Coecke09c}.
Note how the string diagrams make the topological nature of the equation apparent while hiding the bureaucracy of indices.
\end{proof}
The $(2, 0)$ and $(0, 2)$ Kronecker deltas
--- which we depict as \emph{cups} and \emph{caps} respectively ---
make $\mat{Mat}(\S)$ a \emph{compact closed} category. We recall a couple of useful properties of cups and caps.

\begin{corollary}
For all $n \in \mathbb{N}$,
the effect $\mat{cup}_{n} = \delta_{n}^{2, 0} : n \otimes n \rightarrow 1$
and the state $\mat{cap}_{n} = \delta_{n}^{0, 2} : 1 \rightarrow n \otimes n$
obey the \emph{snake equations}:
$$
\begin{tikzpicture} [scale=0.666, baseline={([yshift=-.5ex]current bounding box.center)},vertex/.style={anchor=south, circle,fill=black!25,minimum size=18pt,inner sep=2pt}]
\draw (-1, 1)--(-1, 0);

\draw (-1, 0) to [out=-90, in=180] (0, -1);
\node at (0, -0.75) {$n$};
\draw (0, -1) to [out=0, in=-90] (1, 0);

\draw (1, 0) to [out=90, in=180] (2, 1);
\node at (2, 0.75) {$n$};
\draw (2, 1) to [out=0, in=90] (3, 0);

\draw (3, 0)--(3, -1);
\end{tikzpicture}
\quad
=
\quad
\begin{tikzpicture} [scale=0.666, baseline={([yshift=-.5ex]current bounding box.center)},vertex/.style={anchor=south, circle,fill=black!25,minimum size=18pt,inner sep=2pt}]
\draw (0,-1) to (0,1);
\node at (0.5, 0) {$n$};
\end{tikzpicture}
\quad = \quad
\begin{tikzpicture} [scale=0.666, baseline={([yshift=-.5ex]current bounding box.center)},vertex/.style={anchor=south, circle,fill=black!25,minimum size=18pt,inner sep=2pt}]
\draw (-1, -1)--(-1, 0);

\draw (-1, 0) to [out=90, in=180] (0, 1);
\node at (0, 0.75) {$n$};
\draw (0, 1) to [out=0, in=90] (1, 0);

\draw (1, 0) to [out=-90, in=180] (2, -1);
\node at (2, -0.75) {$n$};
\draw (2, -1) to [out=0, in=-90] (3, 0);

\draw (3, 0)--(3, 1);
\end{tikzpicture}
$$
\end{corollary}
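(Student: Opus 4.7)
The plan is to derive both snake equations as immediate instances of the spider fusion theorem, reducing the composite on the left-hand side to a single spider, and then observing that this single spider is the identity.

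First I would rewrite the left-hand diagram algebraically: it is the composite
$(\mathtt{id}_n \otimes \mathtt{cup}_n)(\mathtt{cap}_n \otimes \mathtt{id}_n) = (\mathtt{id}_n \otimes \delta_n^{2,0})(\delta_n^{0,2} \otimes \mathtt{id}_n)$.
This matches exactly the left-hand side of the spider fusion equation of Theorem \ref{spiderFusion}, with parameters $a = 0$, $b = 2$, $c = 2$, $d = 0$ and a single connecting wire ($k=1$, if we use the indexing from the proof of spider fusion). Applying the theorem fuses the two spiders into $\delta_n^{0+2,\,2+0 - 2} = \delta_n^{1,1}$, i.e.\ a single spider with one input and one output (the two fused middle legs having disappeared into the interior).

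Next I would unfold $\delta_n^{1,1}$ from the definition: $\delta_n^{1,1} = \sum_{i<n} \ket{e_i}\bra{e_i}$, which is exactly the resolution of the identity in the chosen basis and hence equals $\mathtt{id}_n$. This gives the first snake equation. The second snake equation follows by the mirror-image argument (swap the roles of the two tensor factors and of inputs/outputs), which is again a direct application of spider fusion with $a = 2$, $b = 0$, $c = 0$, $d = 2$.

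I do not expect any real obstacle: once Theorem \ref{spiderFusion} is in hand, the corollary is essentially a bookkeeping exercise matching the indices of the snake diagram to the indices in the fusion rule, plus the trivial observation $\delta_n^{1,1} = \mathtt{id}_n$. The only thing worth emphasising, as the authors already do for the theorem itself, is that the string-diagrammatic presentation makes the argument a purely topological straightening of the wire, which is the real content of the snake equations.
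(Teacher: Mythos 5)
Your proposal is correct and follows essentially the same route as the paper: both snake equations are obtained by fusing the cup and cap spiders via Theorem \ref{spiderFusion} and then identifying $\delta_n^{1,1} = \sum_{i<n}\ket{e_i}\bra{e_i} = \mat{id}_n$. Only your index bookkeeping is slightly garbled --- as written, $\delta_n^{0+2,\,2+0-2}$ evaluates to $\delta_n^{2,0}$; matching the fusion rule as stated in the paper's proof of Theorem \ref{spiderFusion} one should take $a=0$, $b=1$, $k=1$, $c=1$, $d=0$, which correctly yields $\delta_n^{a+c,\,b+d}=\delta_n^{1,1}$.
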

\begin{proof}
We have:$ \enspace
(\mat{id}_{n} \otimes \mat{cup}_{n}) \enspace (\mat{cap}_{n} \otimes \mat{id}_{n})
\enspace = \enspace \mat{id}_{n} \enspace = \enspace 
(\mat{cup}_{n} \otimes \mat{id}_{n}) \enspace (\mat{id}_{n} \otimes \mat{cap}_{n})
\enspace $
which follows from Theorem \ref{spiderFusion}
and $\enspace \delta_{n}^{1, 1} \enspace
= \enspace \sum_{i < n} \ket{e_i}\bra{e_i} \enspace
= \enspace \mat{id}_{n}$, see \cite{Coecke17}.
\end{proof}
\begin{corollary}
We can define the transpose of a matrix $\mat{M} : m \rightarrow n$ as:
$$
\begin{tikzpicture} [scale=0.666, baseline={([yshift=-.5ex]current bounding box.center)},vertex/.style={anchor=south, circle,fill=black!25,minimum size=18pt,inner sep=2pt}]
\node at (2, 2.5) {$m$};
\draw (1.5, 2)--(1.5, 3);
\draw (0, 0) rectangle node {$\mat{M}^T$} (3, 2);
\draw (1.5, 0)--(1.5, -1);
\node at (2, -0.5) {$n$};
\end{tikzpicture}
\qquad = \qquad
\begin{tikzpicture} [scale=0.666, baseline={([yshift=-.5ex]current bounding box.center)},vertex/.style={anchor=south, circle,fill=black!25,minimum size=18pt,inner sep=2pt}]

\draw (-1.5, 2) to [out=90, in=180] (0, 3);
\node at (0, 2.5) {$n$};
\draw (0, 3) to [out=0, in=90] (1.5, 2);

\node at (5.0, 2.5) {$m$};
\draw (4.5, 3)--(4.5, 0);

\draw (0, 0) rectangle node {$\mat{M}$} (3, 2);

\node at (-1.0, -0.5) {$n$};
\draw (-1.5, 2)--(-1.5, -1);

\draw (1.5, 0) to [out=-90, in=180] (3, -1);
\node at (3, -0.5) {$m$};
\draw (3, -1) to [out=0, in=-90] (4.5, 0);
\end{tikzpicture}
$$
\begin{proof}
We have
$\mat{M}^T
= (\mat{id}_{n} \otimes \mat{cup}_{m}) \enspace
(\mat{id}_{n} \otimes \mat{M} \otimes \mat{id}_{m}) \enspace
(\mat{cap}_{n} \otimes \mat{id}_{m})
\enspace : \enspace n \rightarrow m$, see \cite{Coecke17}.
\end{proof}
\end{corollary}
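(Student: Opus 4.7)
The plan is to verify the equation diagrammatically, using the snake equations established in the preceding corollary as the sole non-trivial ingredient. Conceptually the right-hand side presents $\mat{M}$ with one leg bent down through a cap and the other bent up through a cup; sliding $\mat{M}$ along the resulting S-shape via a single yank of the snake exhibits this composite as the same matrix with its input and output wires swapped, which is by definition the transpose.

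To render this rigorous I would first expand the spiders as sums over basis states and effects, $\mat{cap}_{k} = \sum_{i<k} \ket{e_i}\otimes\ket{e_i}$ and $\mat{cup}_{k} = \sum_{i<k} \bra{e_i}\otimes\bra{e_i}$, alongside the expansion $\mat{M} = \sum_{i,j} \mat{M}_{i,j} \ket{e_i}\bra{e_j}$ in the distinguished basis used to define row and column vectors. Substituting these into the right-hand side, I would push the kets and bras past the surrounding identity wires using the bifunctoriality of $\otimes$, then apply $\bra{e_i}\ket{e_{i'}} = \delta_{i,i'}$ on the two contracted wires to collapse the redundant sums. What survives is $\sum_{i,j} \mat{M}_{i,j} \ket{e_j}\bra{e_i}$, which is $\mat{M}^T$ by definition.

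The main obstacle is entirely bookkeeping: one must track which leg of each tensor factor is contracted with which cup or cap, and verify that the overall composite does indeed typecheck as a morphism $n \to m$ — in particular that the cap creates the auxiliary wire of dimension $n$ that eventually meets the output of $\mat{M}$, while the cup contracts the input of $\mat{M}$ against the ancillary wire of dimension $m$ brought up from below. Once these tensor positions are aligned, there is no further content in the argument; it reduces to one application of the snake equation, exactly the fact recorded in \cite{Coecke17}.
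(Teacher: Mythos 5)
Your proposal is correct, and it lands on the same identity the paper records: the right-hand diagram is the composite $(\mat{id} \otimes \mat{cup})\,(\mat{id} \otimes \mat{M} \otimes \mat{id})\,(\mat{cap} \otimes \mat{id})$, and the paper's ``proof'' consists of nothing more than writing down that algebraic transcription and citing \cite{Coecke17}. Your basis-expansion argument --- expanding $\mat{cap}_k = \sum_{i<k}\ket{e_i}\otimes\ket{e_i}$, $\mat{cup}_k = \sum_{i<k}\bra{e_i}\otimes\bra{e_i}$ and $\mat{M} = \sum_{i,j}\mat{M}_{i,j}\ket{e_i}\bra{e_j}$, then contracting with $\bra{e_i}\ket{e_{i'}} = \delta_{i,i'}$ to leave $\sum_{i,j}\mat{M}_{i,j}\ket{e_j}\bra{e_i}$ --- is a legitimate and fully self-contained way to discharge what the paper delegates to a reference, so in that sense you prove strictly more than the paper does. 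One conceptual correction: this verification does not actually ``reduce to one application of the snake equation.'' The snake equations are what make the bent-wire transpose well-behaved (e.g.\ involutive, and what licenses the informal ``sliding the box along the wire'' picture), but the identification of the composite's matrix entries with those of $\mat{M}^T$ uses only the orthonormality of the distinguished basis, i.e.\ the same resolution-of-identity computation that underlies $\delta_n^{1,1} = \mat{id}_n$, not the snake equation as a lemma. Finally, when you do the typechecking you flag as the main bookkeeping obstacle, be aware that the paper's own labelling is internally inconsistent about which leg is $m$ and which is $n$ (the displayed formula only typechecks with the roles of $m$ and $n$ exchanged relative to $\mat{M} : m \rightarrow n$); your proof should fix one consistent convention, namely that the cap is taken over the input dimension of $\mat{M}$ and the cup over its output dimension, yielding $\mat{M}^T : n \rightarrow m$.
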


\section{Diagrams for Distributional Compositional Semantics}
Lambek introduced~\emph{pregroup grammars} as a means of encoding the grammatical structure of natural language.
In~\cite{DisCoCat08, DisCoCat11}, Clark et al.~observe that both pregroups and vector spaces carry the same mathematical structure:
both are compact closed categories.
This allows us to assign vectors to words and automatically combine them together using a linear map induced by the grammatical reduction.
In this section, we give a graphical presentation of the concrete implementation described in \cite{Grefenstette11} and reformulate their construction in terms of knowledge graphs.

The first step is to build a \emph{noun space} $N$ which we will assign to the noun type of our pregroup grammar.
This can be done by collecting co-occurrence data over a large natural language corpus:
the dimension $n$ of the noun space can be seen as a hyper-parameter of the model,
where in the simple case each dimension corresponds to a ``context word".
However, our model is agnostic of the exact method used to construct $N$:
we will only assume that every entity $e$ is assigned a vector $v_e \in N$,
and store them as one matrix $\mat{E} = \sum_{e \in E} v_e \bra{e}$.
This encoding matrix
$\mat{E} : | \E | \rightarrow n$
gives us a measure of similarity between the entities, indeed whereas we have $\bra{cat}\ket{dog} = 0$ now we expect the inner product
$\bra{cat} \mat{E}^T \mat{E} \ket{dog} \in \mathbb{R}^+$ to be strictly greater than zero:
their contexts share at least the words ``pet", ``food", etc.

Then for every verb $v \in R$, we compute its meaning by summing over the outer products $\mat{E} \ket{s} \otimes \mat{E} \ket{o} : 1 \rightarrow n \otimes n$ of all the encoded entities that it relates, i.e. all $s, o \in E$ such that $(s, v, o) \in K$.
Given an encoding matrix $\mat{E}$ and a knowledge graph $K$ we store this construction as a matrix:
$$
\begin{tikzpicture} [scale=0.666, baseline={([yshift=-.5ex]current bounding box.center)},vertex/.style={anchor=south, circle,fill=black!25,minimum size=18pt,inner sep=2pt}]

\node at (2, 2.5) {$| \R |$};
\draw (1.5, 2)--(1.5, 3);
\draw (0, 0) rectangle node {$\mat{R}$} (3, 2);
\draw (1, 0)--(1, -0.5);
\node at (0.5, -0.25) {$n$};
\draw (2, 0)--(2, -0.5);
\node at (2.5, -0.25) {$n$};
\end{tikzpicture}
\quad = \quad
\begin{tikzpicture} [scale=0.666, baseline={([yshift=-.5ex]current bounding box.center)},vertex/.style={anchor=south, circle,fill=black!25,minimum size=18pt,inner sep=2pt}]

\draw (0, 0) rectangle node {$\mat{E}$} (3, 2);
\draw (1.5, 0)--(1.5, -0.5);
\node at (1, -0.25) {$n$};

\node at (5.5, 2.5) {$| \R |$};
\draw (5, 2)--(5, 3);
\draw (3.5, 2)--(5, 0)--(6.5, 2)--(3.5, 2);
\node at (5, 1) {$\bra{K}$};

\draw (7, 0) rectangle node {$\mat{E}$} (10, 2);
\draw (8.5, 0)--(8.5, -0.5);
\node at (9, -0.25) {$n$};

\draw (1.5, 2) to [out=90, in=180] (2.75, 3);
\node at (2.75, 2.5) {$| \E |$};
\draw (2.75, 3) to [out=0, in=90] (4, 2);

\draw (6, 2) to [out=90, in=180] (7.25, 3);
\node at (7.25, 2.5) {$| \E |$};
\draw (7.25, 3) to [out=0, in=90] (8.5, 2);
\end{tikzpicture}
$$
encoding every relation $v$ in the \emph{verb space} $N \otimes N$.
Finally, we can compute the semantics of ``subject verb object" sentences using the following recipe:
let $\mat{T} = \mat{E} \otimes \mat{R} \otimes \mat{E}$ be the encoding matrix for triples,
\begin{enumerate}
\item tensor the encoded states for $t = ( s, v, o ) \in \E \times \R \times \E$ together
$$
\mat{T} \ket{t} = \mat{E} \ket{s} \otimes \mat{R} \ket{v} \otimes \mat{E} \ket{o} : 1 \rightarrow n \otimes (n \otimes n) \otimes n
$$
\item translate the grammar into a linear map in $\mat{Mat}_\mathbb{R}^+$, here the effect
$$
\mat{G} = \mat{cup}_n \otimes \mat{cup}_n : (n \otimes n) \otimes (n \otimes n) \rightarrow 1
$$
\item compose word meanings with grammar to get a scalar
$\mat{G} \mat{T} \ket{t} \in \mathbb{R}^+$.
\end{enumerate}
Computing these real-valued scalars allows to generalise to unseen sentences and outperform the standard bag-of-words approach on word disambuigation tasks, where the experimental data can be seen as a small knowledge graph \cite{Grefenstette11}.
We can give the semantics of more complex sentences --- hence more complex knowledge graph patterns --- by modeling relative pronouns as three-output spiders over $n$. For any $\mat{x}, \mat{y} : 1 \rightarrow n$, if we encode the word ``that" as the Kronecker delta $\delta_n^{0, 3} : n^3 \rightarrow 1$ and apply the appropriate grammatical reduction we get:
$$
\mat{G'} \enspace \big( \enspace \mat{x} \otimes that \otimes \mat{y} \enspace \big)
\enspace = \enspace
\delta_n^{2, 1} \enspace \big( \enspace \mat{x} \otimes \mat{y} \enspace \big)
\enspace : \enspace 1 \rightarrow n
$$
where $\mat{G'} = \mat{cup}_n \otimes \mat{id}_n \otimes \mat{cup}_n : n^5 \rightarrow 1$.
This the coordinate-wise multiplication of the vectors $\mat{x}$ and $\mat{y}$, which can be seen as the intersection of entities in co-occurrence space, see \cite{Frobenius1} and \cite{Frobenius2}.
\begin{example}
If we encode some philosophy into our knowledge graph $K$, we want that:
$$\begin{tikzpicture}[scale=0.666, baseline={([yshift=-.5ex]current bounding box.center)},vertex/.style={anchor=south, circle,fill=black!25,minimum size=18pt,inner sep=2pt}]

\node at (8.5, 3.5) {$Men$};
\draw (7.0, 3)--(10.0, 3)--(8.5, 4.5)--(7.0, 3);

\node at (9, 2.5) {$| \E |$};
\draw (8.5, 2)--(8.5, 3);
\draw (7, 0) rectangle node {$\mat{E}$} (10, 2);

\node at (12.5, 3.5) {$that$};
\node at (13.25, 2.375) {$\delta_n^{0, 3}$};

\node [draw, circle, scale=0.5] (1) at (12.5, 2) {};
\draw (1) to [out=180, in=90] (11.75, 0);
\draw (1) to [out=0, in=90] (13.25, 0);

\node at (16.5, 3.5) {$are$};
\draw (15.0, 3)--(18.0, 3)--(16.5, 4.5)--(15.0, 3);

\node at (17, 2.5) {$| \R |$};
\draw (16.5, 2)--(16.5, 3);
\draw (15, 0) rectangle node {$\mat{R}$} (18, 2);

\node at (20, 3.5) {$mortal$};
\draw (18.5, 3)--(21.5, 3)--(20.0, 4.5)--(18.5, 3);

\node at (20.5, 2.5) {$| \E |$};
\draw (20, 2)--(20, 3);
\draw (18.5, 0) rectangle node {$\mat{E}$} (21.5, 2);

\draw (8.5, 0) to [out=-90, in=180] (10, -1);
\node at (10, -0.75) {$n$};
\draw (10, -1) to [out=0, in=-90] (11.75, 0);

\draw (13.25, 0) to [out=-90, in=180] (14.5, -1);
\node at (14.5, -0.75) {$n$};
\draw (14.5, -1) to [out=0, in=-90] (16, 0);

\draw (17, 0) to [out=-90, in=180] (18.5, -1);
\node at (18.5, -0.75) {$n$};
\draw (18.5, -1) to [out=0, in=-90] (20, 0);

\draw (1)--(12.5, -1);
\node at (13, -0.75) {$n$};

\end{tikzpicture}
\qquad = \qquad
\begin{tikzpicture}[scale=0.666, baseline={([yshift=-.5ex]current bounding box.center)},vertex/.style={anchor=south, circle,fill=black!25,minimum size=18pt,inner sep=2pt}]
\node at (8.5, 3.5) {$Men$};
\draw (7.0, 3)--(10.0, 3)--(8.5, 4.5)--(7.0, 3);

\node at (9, 2.5) {$| \E |$};
\draw (8.5, 2)--(8.5, 3);
\draw (7, 0) rectangle node {$\mat{E}$} (10, 2);

\draw (8.5, 0)--(8.5, -1);
\node at (9, -0.75) {$n$};
\end{tikzpicture}
$$
i.e. all men are mortal.
\end{example}
\begin{example}
We compute the semantics of ``Alice loves boys that tell jokes." as the following scalar:
$$\begin{tikzpicture}[scale=0.666]
\node at (1.5, 3.5) {$Alice$};
\draw (0.0, 3)--(3.0, 3)--(1.5, 4.5)--(0.0, 3);

\node at (2, 2.5) {$| \E |$};
\draw (1.5, 2)--(1.5, 3);
\draw (0, 0) rectangle node {$\mat{E}$} (3, 2);

\node at (5.0, 3.5) {$loves$};
\draw (3.5, 3)--(6.5, 3)--(5.0, 4.5)--(3.5, 3);

\node at (5.5, 2.5) {$| \R |$};
\draw (5, 2)--(5, 3);
\draw (3.5, 0) rectangle node {$\mat{R}$} (6.5, 2);

\node at (8.5, 3.5) {$boys$};
\draw (7.0, 3)--(10.0, 3)--(8.5, 4.5)--(7.0, 3);

\node at (9, 2.5) {$| \E |$};
\draw (8.5, 2)--(8.5, 3);
\draw (7, 0) rectangle node {$\mat{E}$} (10, 2);

\node at (12.5, 3.5) {$that$};
\node at (13.25, 2.375) {$\delta_n^{0, 3}$};

\node [draw, circle, scale=0.5] (1) at (12.5, 2) {};
\draw (1)--(12.5, 0);
\draw (1) to [out=180, in=90] (11.75, 0);
\draw (1) to [out=0, in=90] (13.25, 0);

\node at (16.5, 3.5) {$tell$};
\draw (15.0, 3)--(18.0, 3)--(16.5, 4.5)--(15.0, 3);

\node at (17, 2.5) {$| \R |$};
\draw (16.5, 2)--(16.5, 3);
\draw (15, 0) rectangle node {$\mat{R}$} (18, 2);

\node at (20, 3.5) {$jokes$};
\draw (18.5, 3)--(21.5, 3)--(20.0, 4.5)--(18.5, 3);

\node at (20.5, 2.5) {$| \E |$};
\draw (20, 2)--(20, 3);
\draw (18.5, 0) rectangle node {$\mat{E}$} (21.5, 2);

\node at (22, 3.5) {$.$};

\draw (1.5, 0) to [out=-90, in=180] (3, -1);
\node at (3, -0.75) {$n$};
\draw (3, -1) to [out=0, in=-90] (4.5, 0);

\draw (8.5, 0) to [out=-90, in=180] (10, -1);
\node at (10, -0.75) {$n$};
\draw (10, -1) to [out=0, in=-90] (11.75, 0);

\draw (13.25, 0) to [out=-90, in=180] (14.5, -1);
\node at (14.5, -0.75) {$n$};
\draw (14.5, -1) to [out=0, in=-90] (16, 0);

\draw (17, 0) to [out=-90, in=180] (18.5, -1);
\node at (18.5, -0.75) {$n$};
\draw (18.5, -1) to [out=0, in=-90] (20, 0);

\draw (5.5, 0) to [out=-90, in=180] (8.875, -2);
\node at (8.875, -1.75) {$n$};
\draw (8.875, -2) to [out=0, in=-90] (12.5, 0);

\end{tikzpicture}$$
\end{example}

\section{From Questions and Discourse to Knowledge Graph Queries}

From an NLP perspective, computing the diagram for the sentence above can equivalently be seen as answering the question ``Do the boys that Alice loves tell jokes?".
We will exploit this question-answer duality to show how natural language can be translated into machine-readable format in a structure-preserving way.
Given a concrete implementation of a knowledge graph, also called a \emph{triplestore}, the semantics of yes-no questions can be interpreted as the \texttt{true} - \texttt{false} output of \textsc{Ask}-type queries in the \textsc{Sparql} protocol and RDF query language.
As a first step towards more complex query patterns, we extend this language-to-database translation to object and subject questions as \textsc{Select}-type queries.

In \cite{Lambek08}, Lambek gives a thorough analysis of the grammar of questions in terms of pregroups.
Here we will leave the subtleties of the English language aside and only assume designated grammatical types for object and subject questions;
in our semantic category $\mat{Mat}(\S)$, both correspond to effects $\mat{q} : | \E | \rightarrow 1$.
Indeed, given a potential answer $a \in E$ the inner product $\mat{q} \ket{a} \in \S$ gives us a measure of how well $a$ answers the query $\mat{q}$;
comparing these scalars allow us to translate a natural language question into a ranking of its possible answers.
Note that if we interpret $\mat{q}$ with $\S = \mathbb{B}$ we get crisp answers while taking $\S = \mathbb{R}^+$ and the encoding $\mat{T} = \mat{E} \otimes \mat{R} \otimes \mat{E}$ yields real-valued approximations.

In order to form a subject question, Lambek starts by constructing incomplete sentences with a hole in the subject position, e.g. ``- loves Bob". Object questions proceed dually, for example ``Alice loves -".
Question words like ``who" and ``whom" can then be seen as processes from these incomplete sentences to the question type.
In our setup, the semantics of both words are given by the encoding matrix $\mat{E} : |\E| \rightarrow n$, they take the state for the answer as input and feed it in the computation of the corresponding query.
The semantics for ``Who loves Bob?" and ``Who does Alice love?" are given by:
\begin{equation*}
\begin{aligned}
\begin{tikzpicture} [scale=0.666, baseline={([yshift=-.5ex]current bounding box.center)},vertex/.style={anchor=south, circle,fill=black!25,minimum size=18pt,inner sep=2pt}]
\draw (0, 1.5)--(0, 0);
\node at (0.5, 0.75) {$| \E |$};

\node at (0, -1) {$\mat{q}_S$};
\draw (-2, 0)--(2, 0)--(0, -2)--(-2, 0);
\end{tikzpicture}
\quad =& \quad
\begin{tikzpicture} [scale=0.666, baseline={([yshift=-.5ex]current bounding box.center)},vertex/.style={anchor=south, circle,fill=black!25,minimum size=18pt,inner sep=2pt}]

\node at (0, 3.5) {$Who$};

\node at (2.25, 2.5) {$| \E |$};
\draw (1.5, 2)--(1.5, 4.5);
\draw (0, 0) rectangle node {$\mat{E}$} (3, 2);

\node at (5.0, 3.5) {$loves$};
\draw (3.5, 3)--(6.5, 3)--(5.0, 4.5)--(3.5, 3);

\node at (5.75, 2.5) {$| \R |$};
\draw (5, 2)--(5, 3);
\draw (3.5, 0) rectangle node {$\mat{R}$} (6.5, 2);

\node at (8.5, 3.5) {$Bob$};
\draw (7.0, 3)--(10.0, 3)--(8.5, 4.5)--(7.0, 3);

\node at (9.25, 2.5) {$| \E |$};
\draw (8.5, 2)--(8.5, 3);
\draw (7, 0) rectangle node {$\mat{E}$} (10, 2);

\node at (10.5, 3.5) {$?$};

\draw (1.5, 0) to [out=-90, in=180] (3, -1);
\node at (3, -0.75) {$n$};
\draw (3, -1) to [out=0, in=-90] (4.5, 0);

\draw (5.5, 0) to [out=-90, in=180] (7, -1);
\node at (7, -0.75) {$n$};
\draw (7, -1) to [out=0, in=-90] (8.5, 0);
\end{tikzpicture}
\\
\begin{tikzpicture} [scale=0.666, baseline={([yshift=-.5ex]current bounding box.center)},vertex/.style={anchor=south, circle,fill=black!25,minimum size=18pt,inner sep=2pt}]
\draw (0, 1.5)--(0, 0);
\node at (0.5, 0.75) {$| \E |$};

\node at (0, -1) {$\mat{q}_O$};
\draw (-2, 0)--(2, 0)--(0, -2)--(-2, 0);
\end{tikzpicture}
\quad =& \quad
\begin{tikzpicture} [scale=0.666, baseline={([yshift=-.5ex]current bounding box.center)},vertex/.style={anchor=south, circle,fill=black!25,minimum size=18pt,inner sep=2pt}]
\node at (0, 3.5) {$Who$};

\node at (2.25, 2.5) {$| \E |$};
\draw (1.5, 2)--(1.5, 4.5);
\draw (0, 0) rectangle node {$\mat{E}$} (3, 2);

\node at (5.0, 3.5) {$does$};

\draw (3.5, 0) to [out=90, in=180] (5, 2);
\node at (5, 1.5) {$n$};
\draw (5, 2) to [out=0, in=90] (6.5, 0);

\node at (8.5, 3.5) {$Alice$};
\draw (7.0, 3.0)--(10.0, 3.0)--(8.5, 4.5)--(7.0, 3);

\node at (9.25, 2.5) {$| \E |$};
\draw (8.5, 2)--(8.5, 3);
\draw (7, 0) rectangle node {$\mat{E}$} (10, 2);

\node at (12, 3.5) {$love$};
\draw (10.5, 3.0)--(13.5, 3)--(12, 4.5)--(10.5, 3);

\node at (12.75, 2.5) {$| \R |$};
\draw (12, 2)--(12, 3);
\draw (10.5, 0) rectangle node {$\mat{R}$} (13.5, 2);

\node at (14, 3.5) {$?$};

\draw (1.5, 0) to [out=-90, in=180] (2.5, -1);
\node at (2.5, -0.75) {$n$};
\draw (2.5, -1) to [out=0, in=-90] (3.5, 0);

\draw (8.5, 0) to [out=-90, in=180] (10, -1);
\node at (10, -0.75) {$n$};
\draw (10, -1) to [out=0, in=-90] (11.5, 0);

\draw (6.5, 0) to [out=-90, in=180] (9.5, -2);
\node at (9.5, -1.75) {$n$};
\draw (9.5, -2) to [out=0, in=-90] (12.5, 0);
\end{tikzpicture}
\end{aligned}
\end{equation*}
Note that the semantics of ``does" is given by a cap, i.e. a maximally-entangled state of two noun sytems, this is discussed further in \cite{Clark13}.
Indeed, using the symmetry
$(\bra{x} \otimes \bra{y}) \enspace \mat{cap}_{| \E |} \enspace = \enspace (\bra{y} \otimes \bra{x}) \enspace \mat{cap}_{| \E |}$ for all $x, y \in E$
and the snake equations we can rewrite object questions and see the cap serve as an information-passing mechanism:
\begin{equation*}
\begin{aligned}
\begin{tikzpicture} [scale=0.666, baseline={([yshift=-.5ex]current bounding box.center)},vertex/.style={anchor=south, circle,fill=black!25,minimum size=18pt,inner sep=2pt}]
\draw (0, 1.5)--(0, 0);
\node at (0.5, 0.75) {$| \E |$};

\node at (0, -1) {$\mat{q}_O$};
\draw (-2, 0)--(2, 0)--(0, -2)--(-2, 0);
\end{tikzpicture}
\quad = \quad
\begin{tikzpicture} [scale=0.666, baseline={([yshift=-.5ex]current bounding box.center)},vertex/.style={anchor=south, circle,fill=black!25,minimum size=18pt,inner sep=2pt}]

\node at (1.5, 3.5) {$Alice$};
\draw (0.0, 3)--(3.0, 3)--(1.5, 4.5)--(0.0, 3);

\node at (2, 2.5) {$| \E |$};
\draw (1.5, 2)--(1.5, 3);
\draw (0, 0) rectangle node {$\mat{E}$} (3, 2);

\node at (5.0, 3.5) {$loves$};
\draw (3.5, 3)--(6.5, 3)--(5.0, 4.5)--(3.5, 3);

\node at (5.75, 2.5) {$| \R |$};
\draw (5, 2)--(5, 3);
\draw (3.5, 0) rectangle node {$\mat{R}$} (6.5, 2);

\node at (10, 3.5) {$whom?$};

\node at (9.25, 2.5) {$| \E |$};
\draw (8.5, 2)--(8.5, 4.5);
\draw (7, 0) rectangle node {$\mat{E}$} (10, 2);

\draw (1.5, 0) to [out=-90, in=180] (3, -1);
\node at (3, -0.75) {$n$};
\draw (3, -1) to [out=0, in=-90] (4.5, 0);

\draw (5.5, 0) to [out=-90, in=180] (7, -1);
\node at (7, -0.75) {$n$};
\draw (7, -1) to [out=0, in=-90] (8.5, 0);
\end{tikzpicture}
\end{aligned}
\end{equation*}

We can extend this to two-variable \textsc{Select} queries such as

\begin{equation*}
\begin{aligned}
\begin{tikzpicture} [scale=0.666, baseline={([yshift=-.5ex]current bounding box.center)},vertex/.style={anchor=south, circle,fill=black!25,minimum size=18pt,inner sep=2pt}]
\draw (0.8, 1.5)--(0.8, 0);
\node at (1.3, 0.5) {$| \E |$};
\draw (-0.8, 1.5)--(-0.8, 0);
\node at (-0.3, 0.5) {$| \E |$};

\node at (0, -1) {$\mat{q}$};
\draw (-2, 0)--(2, 0)--(0, -2)--(-2, 0);
\end{tikzpicture}
\quad = \quad
\begin{tikzpicture} [scale=0.666, baseline={([yshift=-.5ex]current bounding box.center)},vertex/.style={anchor=south, circle,fill=black!25,minimum size=18pt,inner sep=2pt}]

\node at (0.5, 3.5) {$Who$};

\node at (2, 2.5) {$| \E |$};
\draw (1.5, 2)--(1.5, 4.5);
\draw (0, 0) rectangle node {$\mat{E}$} (3, 2);

\node at (5.0, 3.5) {$loves$};
\draw (3.5, 3)--(6.5, 3)--(5.0, 4.5)--(3.5, 3);

\node at (5.75, 2.5) {$| \R |$};
\draw (5, 2)--(5, 3);
\draw (3.5, 0) rectangle node {$\mat{R}$} (6.5, 2);

\node at (10, 3.5) {$whom?$};

\node at (9.25, 2.5) {$| \E |$};
\draw (8.5, 2)--(8.5, 4.5);
\draw (7, 0) rectangle node {$\mat{E}$} (10, 2);

\draw (1.5, 0) to [out=-90, in=180] (3, -1);
\node at (3, -0.75) {$n$};
\draw (3, -1) to [out=0, in=-90] (4.5, 0);

\draw (5.5, 0) to [out=-90, in=180] (7, -1);
\node at (7, -0.75) {$n$};
\draw (7, -1) to [out=0, in=-90] (8.5, 0);
\end{tikzpicture}
\end{aligned}
\end{equation*}

however, general \textsc{Select} queries can have an unbounded number of variables whereas natural language questions of three arguments or more will sound awkward to a human speaker --- picture a stranger asking you: ``Who did what where with whom?"...
Instead, we argue that natural language speakers can express complex query patterns through ambiguous \emph{discourse} together with \emph{anaphora} as a resource for interaction of meanings.
Concretely, we translate personal pronouns such as ``us" and ``them" in terms of \textsc{Sparql} query variables ranging over the set of entities $\E$,
then we say a sentence is \emph{anaphoric} when it contains a personal pronoun: its meaning depends on that of another expression in context.

We define an \emph{atomic sentence} as one of the form ``subject verb object",
where both subject and object may be either entities or personal pronouns.
In the anaphoric case we obtain a diagram with an open wire for the anaphoric expression, representing the input required from the context in order to compute the meaning of the sentence.
Once the ambiguity has been resolved, i.e. once we have assigned an entity to the pronoun, we get a diagram with no open wires as in the unambiguous case of Section 2. 
Finally a \emph{basic anaphoric discourse} is defined as a sequence of these atomic sentences, when it has $k$ pronouns in total its semantics is an effect of dimension
$|\E|^k$: a knowledge graph query of $k$ variables.

\begin{definition}
Take $a, b \in \set{0, 1}$, an \emph{atomic sentence} is a matrix in
$$ \enspace \bigg\{ \enspace \mat{G} \enspace \big(
\enspace \mat{sub} \otimes \mat{R} \ket{v} \otimes \mat{obj} \enspace
\big)
\enspace : \enspace | \E |^a \otimes | \E |^b \rightarrow 1
\enspace \bigg\}$$
where the verb $v$ ranges over the relations $R$.
If $a = 1$, the sentence is \emph{anaphoric} in the subject position and we take
$\mat{sub} = \mat{E} : | \E | \rightarrow n$.
If $a = 0$, $sub$ ranges over the encoded entities $\mat{E} \ket{s} : 1 \rightarrow n$ and the sentence subject is \emph{unambiguous}. Symmetrically for the object position:
\begin{equation*}
\mat{obj} \enspace = \enspace
\begin{cases}
\begin{aligned}
\enspace &\mat{E} \enspace &:& \enspace | \E | &\rightarrow& &n& \enspace
& \qquad \mathtt{if} \quad b &= 1
\\
\enspace &\mat{E} \ket{o} \enspace &:& \quad 1 &\rightarrow& &n&
& \qquad \mathtt{for} \enspace o &\in \E \enspace \mathtt{otherwise}
\end{aligned}
\end{cases}
\end{equation*}
We define a \emph{basic anaphoric discourse} as the matrix
$\mat{d} \enspace = \enspace \bigotimes_{i < l} \mat{t}_i \enspace : \enspace | \E |^k \rightarrow 1$
obtained by tensoring a list of $l$ atomic sentences
$$
\mat{t}_i \enspace : \enspace | \E |^{a_i} \otimes | \E |^{b_i} \rightarrow 1
$$
where $k \enspace = \enspace \sum_{i < l} \enspace (a_i + b_i)$,
\hspace{10pt} i.e. the discourse $\mat{d}$ contains $k$ anaphoric expressions in total.
\end{definition}
\begin{example}
The semantics of ``Spinoza influenced him. He discovered calculus." is given by the effect
$$\begin{tikzpicture} [scale=0.666]
\node at (1.5, 3.5) {$Spin.$};
\draw (0.0, 3)--(3.0, 3)--(1.5, 4.5)--(0.0, 3);

\node at (2, 2.5) {$| \E |$};
\draw (1.5, 2)--(1.5, 3);
\draw (0, 0) rectangle node {$\mat{E}$} (3, 2);

\draw (1.5, 0) to [out=-90, in=180] (3, -1);
\node at (3, -0.75) {$n$};
\draw (3, -1) to [out=0, in=-90] (4.5, 0);

\node at (5.0, 3.5) {$infl.$};
\draw (3.5, 3)--(6.5, 3)--(5.0, 4.5)--(3.5, 3);

\node at (5.5, 2.5) {$| \R |$};
\draw (5, 2)--(5, 3);
\draw (3.5, 0) rectangle node {$\mat{R}$} (6.5, 2);

\draw (5.5, 0) to [out=-90, in=180] (7, -1);
\node at (7, -0.75) {$n$};
\draw (7, -1) to [out=0, in=-90] (8.5, 0);

\node at (9.5, 3.5) {$him.$};
\draw (8.5, 2)--(8.5, 4.5);

\node at (9, 2.5) {$| \E |$};
\draw (7, 0) rectangle node {$\mat{E}$} (10, 2);

\node at (12, 3.5) {$He$};
\draw (13, 2)--(13, 4.5);

\node at (13.5, 2.5) {$| \E |$};
\draw (11.5, 0) rectangle node {$\mat{E}$} (14.5, 2);

\draw (13, 0) to [out=-90, in=180] (14.5, -1);
\node at (14.5, -0.75) {$n$};
\draw (14.5, -1) to [out=0, in=-90] (16, 0);

\node at (16.5, 3.5) {$disc.$};
\draw (15.0, 3)--(18.0, 3)--(16.5, 4.5)--(15.0, 3);

\node at (17, 2.5) {$| \R |$};
\draw (16.5, 2)--(16.5, 3);
\draw (15, 0) rectangle node {$\mat{R}$} (18, 2);

\node at (20, 3.5) {$calc.$};
\draw (18.5, 3)--(21.5, 3)--(20.0, 4.5)--(18.5, 3);

\node at (20.5, 2.5) {$| \E |$};
\draw (20, 2)--(20, 3);
\draw (18.5, 0) rectangle node {$\mat{E}$} (21.5, 2);

\node at (22, 3.5) {$.$};

\draw (17, 0) to [out=-90, in=180] (18.5, -1);
\node at (18.5, -0.75) {$n$};
\draw (18.5, -1) to [out=0, in=-90] (20, 0);
\end{tikzpicture}$$
\end{example}

\section{Natural language understanding as a process}

We have seen that the semantics of \emph{basic anaphoric discourses} involving $k$ anaphoric expressions can be given in terms of knowledge graph queries of $k$ variables.
However, interpreting directly the effect $|\E|^2 \rightarrow 1$ corresponding to the discourse in the example above as a \textsc{Sparql} query of two variables will give counter-intuitive answers such as:
\begin{center}
``Spinoza influenced Wittgenstein. Newton discovered calculus."
\end{center}
whereas an English speaker will have naturally assumed that the two pronouns ``him" and ``he" have to refer to the same entity.
Without this constraint, maximising over the complex query would be equivalent to maximising the outcome for the two sentences independently.

\emph{Discourse representation theory} is a computational framework first introduced in \cite{Kamp93} for manipulating such natural language constraints.
In this paper, we model basic \emph{discourse representation structures} (DRS) in the sense of \cite{Abramsky14},
where entities are called ``discourse referents", atomic sentences ``literals" and pronouns ``variables".
Using the set of constraints formalised in terms of DRS, together with our encoding $\mat{T} = \mat{E} \otimes \mat{R} \otimes \mat{E}$ allows us to formalise \emph{probabilistic anaphora resolution} as an optimisation problem.

\begin{definition}
Given a \emph{matching function} $\mu : \set{0, \s \dots, \s k - 1} \rightarrow \E$ which assigns an entity in $\E$ to each pronoun in $\mat{d}$, the \emph{resolution of the ambiguity} in the discourse $\mat{d}$ is defined as the following scalar:
$$
\mathcal{A} (\mat{d}, \mu) \enspace = \enspace
d \enspace \bigotimes_{i < k} \ket{ \enspace \mu ( i ) \enspace } \enspace \in \S
$$
Given a set $\mathcal{D}(\mat{d}) \subseteq \E^{\{ 0 \dots k - 1 \}}$ of matching functions satisfying some DRS constraints for the anaphoric discourse $\mat{d}$, we define \emph{probabilistic anaphora resolution} as the following optimisation problem:
$$
\argmax_{\mu \in \mathcal{D}(\mat{d})} \enspace \mathcal{A}(\mat{d}, \mu)
$$
\end{definition}

The function $\mu$ can only give a static picture of the ambiguity resolution's result, instead we want a fine-grained diagram of the computational process involved.
We give a resource-aware reformulation of anaphora resolution in terms of an \emph{entity-store}, defined as the state:
$$
! \E \enspace = \enspace \bigotimes_{e \in \E} \ket{e}
 \enspace : \enspace 1 \rightarrow | \E |^{| \E |}
$$
From the matching function $\mu$ we construct a \emph{matching process} $! \mu$ which takes an entity-store $! \E$ and feeds the appropriate entities into the ambiguous sentences in $\mat{d}$.  
Graphically this construction amounts to drawing the graph of the function $\mu$, labeling all the wires with $| \E |$ and interpreting this as a diagram in $\mat{Mat}(\S)$.
When $i$ edges of the graph meet at a vertex in $\E$
--- i.e. when the same entity is referenced by $i$ distinct pronouns ---
we witness this with a Kronecker delta: the head of a $1$-input $i$-output spider.
Whereas in section 2 the multi-input one-output spider acted as intersection of meanings,
here the dual process copies an entity state and feeds it to its $i$ outputs i.e. for all $e \in E$ we have that:
\begin{align*}
(\delta_{| \E |}^{1, i}) \ket{e}
&\quad = \quad \quad \sum_{e' \in E} \ket{e'}^{(\otimes i)}
\bra{e'}\ket{e}
\\
&\quad = \quad \bigg( \sum_{e' \neq e} \ket{e'}^{(\otimes i)}
\underbrace{\bra{e'}\ket{e}}_{\text{= 0}} \bigg)
\enspace + \enspace
\ket{e}^{(\otimes i)}
\underbrace{\bra{e}\ket{e}}_{\text{= 1}}
\quad = \quad
\ket{e}^{(\otimes i)}
\end{align*}

\begin{theorem}
Given a matching function $\mu : \{ 0, \dots, k - 1 \} \rightarrow \E$ we can construct a matching process $! \mu : | \E |^{| \E |} \rightarrow | \E |^{k}$ such that:
$$
\forall \enspace \mat{d} \enspace : \enspace | \E |^{k} \rightarrow 1
\enspace \cdot \enspace
\mathcal{A} (\mat{d}, \mu) \enspace = \enspace \mat{d} (! \mu) (! \E )
$$
\end{theorem}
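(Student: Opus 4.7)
The plan is to make explicit the graphical construction that the prose before the theorem already describes, then compute directly using the copy identity $\delta_{|\E|}^{1, i} \ket{e} = \ket{e}^{\otimes i}$ displayed just above the theorem. For each entity $e \in \E$, set $i_e := |\mu^{-1}(e)|$, so that $\sum_{e \in \E} i_e = k$. Define the matching process to be the composite
$$
! \mu \enspace = \enspace \sigma_\mu \enspace \bigg( \bigotimes_{e \in \E} \delta_{|\E|}^{1, \, i_e} \bigg) \enspace : \enspace |\E|^{|\E|} \rightarrow |\E|^{k},
$$
where $\sigma_\mu$ is the permutation of the $k$ output wires that sends, for each $e$, the $i_e$ outputs of the spider at $e$ to the positions $\mu^{-1}(e) \subseteq \{0, \dots, k - 1\}$. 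Diagrammatically this is exactly ``drawing the graph of $\mu$'' with a $1$-to-$i_e$ spider planted at each vertex $e$.

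Next I would apply $!\mu$ to the entity-store. Since $!\E = \bigotimes_{e \in \E} \ket{e}$ factors entity-by-entity and tensor distributes over the spider tensor product, we get
$$
\bigg( \bigotimes_{e \in \E} \delta_{|\E|}^{1, \, i_e} \bigg) \enspace !\E \enspace = \enspace \bigotimes_{e \in \E} \big( \delta_{|\E|}^{1, i_e} \ket{e} \big) \enspace = \enspace \bigotimes_{e \in \E} \ket{e}^{\otimes i_e},
$$
using the copy identity from the previous display. Applying the routing permutation $\sigma_\mu$ collects these copies into pronoun order, yielding $\bigotimes_{j < k} \ket{\mu(j)}$. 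Composing with $\mat{d}$ on top then gives $\mat{d} \bigotimes_{j < k} \ket{\mu(j)}$, which is precisely $\mathcal{A}(\mat{d}, \mu)$ by definition.

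The only subtlety to flag is the case $i_e = 0$, corresponding to entities not referenced by any pronoun: the spider $\delta_{|\E|}^{1,0} = \sum_{e' \in \E} \bra{e'}$ is an effect (a ``discard''), and on the one-hot $\ket{e}$ it evaluates to the scalar $1$, so these factors contribute nothing and $!\E$'s unused components are correctly absorbed. Beyond this, and the purely bookkeeping task of checking that $\sigma_\mu$ is well-defined (which follows from $\mu^{-1}$ partitioning $\{0, \dots, k-1\}$), there is no real obstacle: the theorem is essentially a verification that the stated graphical recipe computes the tensor of the intended one-hot states, and the heavy lifting was already done by the spider fusion theorem and the copy computation preceding the statement.
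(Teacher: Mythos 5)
Your proposal is correct and follows essentially the same route as the paper: the paper's proof simply identifies $!\mu$ with the incidence matrix of the graph of $f_\mu(e_0,\dots,e_{|\E|-1}) = (e_{\mu(0)},\dots,e_{\mu(k-1)})$ and defers wire-crossings to the symmetry morphisms, which is exactly your $\sigma_\mu \circ \bigotimes_{e} \delta_{|\E|}^{1,i_e}$ decomposition. Your write-up is in fact more explicit than the paper's, carrying out the copy-identity computation and the $i_e = 0$ discard case that the paper only illustrates in the subsequent example.
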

\begin{proof}
In set-theoretic terms, $ ! \E $ is the list of all the elements $e \in \E$, ordered by the indices of the corresponding vectors $\ket{e} : 1 \rightarrow | \E |$.
Then $! \mu$ is the incidence matrix of the graph for the function
$$
f_{\mu} \enspace (\enspace e_0, \enspace \dots, \enspace e_{| \E | - 1} \enspace)
 \enspace = \enspace
(\enspace e_{\mu (0)}, \enspace \dots, \enspace e_{\mu (k - 1)} \enspace)
\enspace : \enspace \E^{| \E |} \rightarrow \E^k
$$
In the case when the graph of the function $f_\mu$ is not planar, constructing $! \mu$ as a diagram in $\mat{Mat}(\S)$ requires a formalisation of the intuitive notion of ``wire-crossing", which can be obtained using the \emph{symmetry morphisms} of $\mat{Mat}(\S)$.
This falls outside the scope of our discussion but we refer the interested reader to any introduction to \emph{symmetric monoidal categories} (e.g. \cite{Coecke09b}) which are the suitable framework in this context.
\end{proof}
\begin{example}
Take $\E = \set{Descartes, Spinoza, Leibniz, Newton, calculus}$, let $\mu$ be the constant function $\set{0, 1 \mapsto Leibniz}$. We have:
\begin{equation*}
\begin{aligned}
&
\begin{tikzpicture} [scale=0.666, baseline={([yshift=-.5ex]current bounding box.center)},vertex/.style={anchor=south, circle,fill=black!25,minimum size=18pt,inner sep=2pt}]
\draw [decorate,decoration={brace,amplitude=5},yshift=0.5]
(1.75, 4.75) -- (1.75, 7);
\node at (0, 6) {$(! \mu)$};

\draw[dashed] (2, 4.75) rectangle node {} (19.5, 7);

\node at (4.5, 6.375) {$\delta_{| \E |}^{1, 0}$};
\node [draw, circle, scale=0.5] (1) at (3.75, 6.0) {};
\draw (1)--(3.75, 7.5);

\node at (8, 6.375) {$\delta_{| \E |}^{1, 0}$};
\node [draw, circle, scale=0.5] (2) at (7.25, 6.0) {};
\draw (2)--(7.25, 7.5);

\node at (11.25, 5.375) {$\delta_{| \E |}^{1, 2}$};
\node [draw, circle, scale=0.5] (3) at (10.75, 6.0) {};
\draw (3) to [out=180, in=90] (8.5, 4.5);
\draw (3) to [out=0, in=90] (13, 4.5);
\draw (3)--(10.75, 7.5);

\node at (15, 6.375) {$\delta_{| \E |}^{1, 0}$};
\node [draw, circle, scale=0.5] (4) at (14.25, 6.0) {};
\draw (4)--(14.25, 7.5);

\node at (18.5, 6.375) {$\delta_{| \E |}^{1, 0}$};
\node [draw, circle, scale=0.5] (5) at (17.75, 6.0) {};
\draw (5)--(17.75, 7.5);

\draw [decorate,decoration={brace,amplitude=5},yshift=0.5]
(1.75, 7.25) -- (1.75, 9.25);
\node at (0, 8.25) {$(! E)$};

\draw[dashed] (2, 7.25) rectangle node {} (19.5, 9.25);

\node at (3.75, 8) {$Desc.$};
\draw (2.25, 7.5)--(5.25, 7.5)--(3.75, 9.0)--(2.25, 7.5);

\node at (7.25, 8) {$Spin.$};
\draw (5.75, 7.5)--(8.75, 7.5)--(7.25, 9.0)--(5.75, 7.5);

\node at (10.75, 8) {$Leib.$};
\draw (9.25, 7.5)--(12.25, 7.5)--(10.75, 9.0)--(9.25, 7.5);

\node at (14.25, 8) {$Newt.$};
\draw (12.75, 7.5)--(15.75, 7.5)--(14.25, 9.0)--(12.75, 7.5);

\node at (17.75, 8) {$calc.$};
\draw (16.25, 7.5)--(19.25, 7.5)--(17.75, 9.0)--(16.25, 7.5);
\end{tikzpicture}
\\
\\
&= \hspace{40pt}
\begin{tikzpicture} [scale=0.666, baseline={([yshift=-.5ex]current bounding box.center)},vertex/.style={anchor=south, circle,fill=black!25,minimum size=18pt,inner sep=2pt}]

\node at (4.5, 6.375) {$\delta_{| \E |}^{1, 0}$};
\node [draw, circle, scale=0.5] (1) at (3.75, 6.0) {};
\draw (1)--(3.75, 7.5);

\node at (8, 6.375) {$\delta_{| \E |}^{1, 0}$};
\node [draw, circle, scale=0.5] (2) at (7.25, 6.0) {};
\draw (2)--(7.25, 7.5);

\node at (11.25, 5.375) {$\delta_{| \E |}^{1, 2}$};
\node [draw, circle, scale=0.5] (3) at (10.75, 6.0) {};
\draw (3) to [out=180, in=90] (8.5, 4.5);
\draw (3) to [out=0, in=90] (13, 4.5);
\draw (3)--(10.75, 7.5);

\node at (15, 6.375) {$\delta_{| \E |}^{1, 0}$};
\node [draw, circle, scale=0.5] (4) at (14.25, 6.0) {};
\draw (4)--(14.25, 7.5);

\node at (18.5, 6.375) {$\delta_{| \E |}^{1, 0}$};
\node [draw, circle, scale=0.5] (5) at (17.75, 6.0) {};
\draw (5)--(17.75, 7.5);

\node at (3.75, 8) {$Desc.$};
\draw (2.25, 7.5)--(5.25, 7.5)--(3.75, 9.0)--(2.25, 7.5);
\draw[dashed] (2.125, 5.75) rectangle node {} (5.375, 9.125);

\node at (7.25, 8) {$Spin.$};
\draw (5.75, 7.5)--(8.75, 7.5)--(7.25, 9.0)--(5.75, 7.5);
\draw[dashed] (5.625, 5.75) rectangle node {} (8.875, 9.125);

\node at (10.75, 8) {$Leib.$};
\draw (9.25, 7.5)--(12.25, 7.5)--(10.75, 9.0)--(9.25, 7.5);

\node at (14.25, 8) {$Newt.$};
\draw (12.75, 7.5)--(15.75, 7.5)--(14.25, 9.0)--(12.75, 7.5);
\draw[dashed] (12.625, 5.75) rectangle node {} (15.875, 9.125);

\node at (17.75, 8) {$calc.$};
\draw (16.25, 7.5)--(19.25, 7.5)--(17.75, 9.0)--(16.25, 7.5);
\draw[dashed] (16.125, 5.75) rectangle node {} (19.375, 9.125);

\draw [decorate,decoration={brace,amplitude=5},yshift=0.5]
(20, 9.125)--(20, 5.75);

\node at (21, 7.5) {$= 1$};

\end{tikzpicture}
\\
\\
&= \hspace{132pt}
\begin{tikzpicture} [scale=0.666, baseline={([yshift=-.5ex]current bounding box.center)},vertex/.style={anchor=south, circle,fill=black!25,minimum size=18pt,inner sep=2pt}]
\node at (1.5, 3.5) {$Leib.$};
\draw (0.0, 3)--(3.0, 3)--(1.5, 4.5)--(0.0, 3);
\draw (1.5, 3.0)--(1.5, 2.0);
\node at (2, 2.5) {$|\E|$};
\end{tikzpicture}
\hspace{28pt}
\begin{tikzpicture} [scale=0.666, baseline={([yshift=-.5ex]current bounding box.center)},vertex/.style={anchor=south, circle,fill=black!25,minimum size=18pt,inner sep=2pt}]
\node at (1.5, 3.5) {$Leib.$};
\draw (0.0, 3)--(3.0, 3)--(1.5, 4.5)--(0.0, 3);
\draw (1.5, 3.0)--(1.5, 2.0);
\node at (2, 2.5) {$|\E|$};
\end{tikzpicture}
\end{aligned}
\end{equation*}
\end{example}

Even though our construction involves systems of exponential dimensions, the concrete computation it induces is linear in the size $n \times | \E |$ of the encoding matrix: we are only retrieving and copying vectors.
In effect, what this abstract formulation allows is to manipulate the knowledge graph, the entity encoding and matching explicitly, while hiding the bureaucratic process of labeling, indexing and wiring the entity-vectors required to compute complex queries.

We can now focus on what really is the core of the model: the interaction between the symbolic world of RDF and the statistical world of distributional semantics.
In diagrammatic terms, this is witnessed by the interaction of distinct spiders, i.e. Kronecker deltas over two different dimensions:
$\delta_{| \E |}$ for copying and matching entities,
$\delta_n$ for their intersection in the noun space $N$.


\begin{example}
We compute the resolution $\mathcal{A}(\mat{d}, \mu) = d (! \mu) (! \E) \in \mathbb{R}$ as the scalar:
$$\begin{tikzpicture} [scale=0.666]
\node at (1.5, 3.5) {$Spin.$};
\draw (0.0, 3)--(3.0, 3)--(1.5, 4.5)--(0.0, 3);

\node at (2, 2.5) {$| \E |$};
\draw (1.5, 2)--(1.5, 3);
\draw (0, 0) rectangle node {$\mat{E}$} (3, 2);

\draw (1.5, 0) to [out=-90, in=180] (3, -1);
\node at (3, -0.75) {$n$};
\draw (3, -1) to [out=0, in=-90] (4.5, 0);

\node at (5.0, 3.5) {$infl.$};
\draw (3.5, 3)--(6.5, 3)--(5.0, 4.5)--(3.5, 3);

\node at (5.5, 2.5) {$| \R |$};
\draw (5, 2)--(5, 3);
\draw (3.5, 0) rectangle node {$\mat{R}$} (6.5, 2);

\draw (5.5, 0) to [out=-90, in=180] (7, -1);
\node at (7, -0.75) {$n$};
\draw (7, -1) to [out=0, in=-90] (8.5, 0);

\node at (9.5, 3.5) {$him.$};
\draw (8.5, 2)--(8.5, 4.5);

\node at (9, 2.5) {$| \E |$};
\draw (7, 0) rectangle node {$\mat{E}$} (10, 2);

\node at (12, 3.5) {$He$};
\draw (13, 2)--(13, 4.5);

\node at (13.5, 2.5) {$| \E |$};
\draw (11.5, 0) rectangle node {$\mat{E}$} (14.5, 2);

\draw (13, 0) to [out=-90, in=180] (14.5, -1);
\node at (14.5, -0.75) {$n$};
\draw (14.5, -1) to [out=0, in=-90] (16, 0);

\node at (16.5, 3.5) {$disc.$};
\draw (15.0, 3)--(18.0, 3)--(16.5, 4.5)--(15.0, 3);

\node at (17, 2.5) {$| \R |$};
\draw (16.5, 2)--(16.5, 3);
\draw (15, 0) rectangle node {$\mat{R}$} (18, 2);

\node at (20, 3.5) {$calc.$};
\draw (18.5, 3)--(21.5, 3)--(20.0, 4.5)--(18.5, 3);

\node at (20.5, 2.5) {$| \E |$};
\draw (20, 2)--(20, 3);
\draw (18.5, 0) rectangle node {$\mat{E}$} (21.5, 2);

\node at (22, 3.5) {$.$};

\draw (17, 0) to [out=-90, in=180] (18.5, -1);
\node at (18.5, -0.75) {$n$};
\draw (18.5, -1) to [out=0, in=-90] (20, 0);

\node at (4.5, 6.375) {$\delta_{| \E |}^{1, 0}$};
\node [draw, circle, scale=0.5] (1) at (3.75, 6.0) {};
\draw (1)--(3.75, 7.5);

\node at (8, 6.375) {$\delta_{| \E |}^{1, 0}$};
\node [draw, circle, scale=0.5] (2) at (7.25, 6.0) {};
\draw (2)--(7.25, 7.5);

\node at (11.25, 5.375) {$\delta_{| \E |}^{1, 2}$};
\node [draw, circle, scale=0.5] (3) at (10.75, 6.0) {};
\draw (3) to [out=180, in=90] (8.5, 4.5);
\draw (3) to [out=0, in=90] (13, 4.5);
\draw (3)--(10.75, 7.5);

\node at (15, 6.375) {$\delta_{| \E |}^{1, 0}$};
\node [draw, circle, scale=0.5] (4) at (14.25, 6.0) {};
\draw (4)--(14.25, 7.5);

\node at (18.5, 6.375) {$\delta_{| \E |}^{1, 0}$};
\node [draw, circle, scale=0.5] (5) at (17.75, 6.0) {};
\draw (5)--(17.75, 7.5);

\node at (3.75, 8) {$Desc.$};
\draw (2.25, 7.5)--(5.25, 7.5)--(3.75, 9.0)--(2.25, 7.5);

\node at (7.25, 8) {$Spin.$};
\draw (5.75, 7.5)--(8.75, 7.5)--(7.25, 9.0)--(5.75, 7.5);

\node at (10.75, 8) {$Leib.$};
\draw (9.25, 7.5)--(12.25, 7.5)--(10.75, 9.0)--(9.25, 7.5);

\node at (14.25, 8) {$Newt.$};
\draw (12.75, 7.5)--(15.75, 7.5)--(14.25, 9.0)--(12.75, 7.5);

\node at (17.75, 8) {$calc.$};
\draw (16.25, 7.5)--(19.25, 7.5)--(17.75, 9.0)--(16.25, 7.5);
\end{tikzpicture}$$
If we extract our encoding matrix $\mat{E}$ and knowledge graph $K$ from an encyclopedia, we should expect this scalar to be close to $1$.
Choosing $\mu = \{ 0, 1 \mapsto Descartes \}$ instead should make it close to $0$.
\end{example}

In order to get the class of all basic DRS, we need to assume that for every $r \in \R$ we have its negation $not ( r ) \in \R$ such that $E (x, not ( r ), y) \in K$ if and only if $(x, r, y) \notin K$.
Lifting this function $not : \R \rightarrow \R$ to the matrix of its graph $\ket{not} : | \R | \rightarrow | \R |$ and deriving a compositional semantics for natural language negations in $Mat_S$ from this is left for future work.
Nevertheless, if we restrict ourselves to the negation-free fragment, we can already give a semantics for basic anaphoric discourses in terms of the conjunctive fragment of \textsc{Sparql}.
Indeed, we can generate any query of the form 
$$
\mathtt{SELECT \enspace * \enspace WHERE} \enspace \{ \enspace
\mathtt{\langle \enspace  x, r, y \enspace \rangle
\enspace \in \enspace BGP \enspace}\}
$$
where $\mathtt{x}$ and $\mathtt{y}$ are taken to range over both entities and variables.
In the Semantic Web community, the set of triples $\mathtt{BGP}$ is also called a \emph{basic graph pattern} \cite{Stefanoni18}.

\section*{Conclusion}

We have developed a process-theoretic framework for natural language understanding, making use of both statistical and logical resources to translate question answering and anaphora resolution as probabilistic knowledge base queries.
Using formal diagrams of matrices we give a high-level picture of NLP algorithms which are motivated both from the point of view of AI and of human cognition.

Even though we focused on $\S = \mathbb{B}$ and $\S = \mathbb{R}^+$, our discussion can be extended to any other semi-ring structure such as the unit interval $[0, 1]$ with $min$ and $max$ encoding fuzzy truth values.
It is also possible to incorporate further structure such as the convexity property used in conceptual space models of cognition, or taking proof relevance into account and record \emph{why} are entities related \cite{Coecke18}.

In the RDF language, variables can also serve as \emph{blank nodes} also called \emph{anonymous resources}: entities that are used in the computation but do not appear in the output of the query, e.g.~$\mathtt{y}$ in
$$
\mathtt{SELECT \enspace x \enspace WHERE \enspace \{ \enspace
	\langle \enspace x, loves, y \enspace \rangle, \enspace
	\langle \enspace y, loves, x \enspace \rangle \enspace \}}
$$
In $\mathtt{Mat}_S$, we conjecture that these would translate in terms of an \emph{entity-discarding} process $\delta_{| \E |}^{0, 1} : 1 \rightarrow | \E |$.

The translation from pregroup grammars to RDF triples is developed further in \cite{Antonin14},
in order to model count nouns (e.g.~``a cat") and relative pronouns (e.g.~``whose tail"), Delpeuch and Preller introduce a notion of categories with \emph{side effects} --- i.e. creation and update of blank nodes. It would be interesting to investigate the connections with our current work.

We would like to investigate how our model relates to concrete architectures for NLP, such as the Tensor Product Recurrent Networks (TPRNs) of \cite{Smolensky17}.
Another machine learning model which would fit our approach is that of \cite{Trouillon17}:
this method uses gradient descent over a landscape of complex-valued encoding matrices $\mat{E}$ and $\mat{R}$ to predict missing values in a knowledge graph $K$.
Linguistically, the passage to the alternative semi-ring of complex numbers
would allow to give two distinct encodings for entities: in our model the object-pronoun ``them" would be modelled as the complex conjugate of the subject ``they".
Modelling the active-to-passive switch with diagrams of complex matrices, as well as exploring potential implementations making use of quantum resources is left to later work.

In this paper we looked only at one discourse $d$ at a time, assuming a global matching process $! \mu$ and entity-store $! E$. 
Using methods from sheaf theory would allow to study the passage from local to global ambiguity and investigate the notion of \emph{contextuality} common to natural language, quantum mechanics and database theory \cite{Abramsky14b}.
This finer-grained analysis of the resolution process, as well as the implementation of other DRS operations, such as disjunction, implication and quantification, in the category $\mathtt{Mat}_\S$ is left for future work.

\newline
\bibliographystyle{eptcs}
\providecommand{\doi}[1]{\textsc{doi}:
\href{http://dx.doi.org/#1}{\nolinkurl{#1}}}
\bibliography{discocat}
\end{document}